\theoremstyle{plain} 
\theoremstyle{plain} \newtheorem{definition}{Definition}[section]
\theoremstyle{plain} 
\theoremstyle{plain} 
\theoremstyle{plain} \newtheorem{proposition}{Proposition}[section]
\theoremstyle{plain} 
\theoremstyle{plain} 
\theoremstyle{plain} 
\theoremstyle{plain}
\newenvironment{proof}[1][Proof]{\begin{trivlist}
\item[\hskip \labelsep {\bfseries #1}]}{\end{trivlist}}
\newcommand{\qed}{\nobreak \ifvmode \relax \else
      \ifdim\lastskip<1.5em \hskip-\lastskip
      \hskip1.5em plus0em minus0.5em \fi \nobreak
      \vrule height0.5em width0.5em depth0.25em\fi}
\def\Reals {{\mathbb{R}}}
\def\CA {{\mathcal{A}}}
\def\CM {{\mathcal{M}}}
\def\CR {{\mathcal{R}}}
\def\CS {{\mathcal{S}}}
\def\CT {{\mathcal{T}}}
\def\argmax{\mathop{\rm arg\,max}}
\newcommand \E {\mathop{\mbox{\bf{E}}}}
\renewcommand \Pr {\mathop{\mbox{\bf{P}}}}
\newcommand \eq {{=}}
\newcommand \given {\mathrel{|}}
\newcommand \pl{{\pi}}
\newcommand \ps{{\pi^*}}
\newcommand \defn {\mathrel{\triangleq}}
\newcommand \MDPs {{\CM}}
\newcommand \BMDPs {{\MDPs_{B}}}
\newcommand \st {s_t}
\newcommand \at {a_t}
\newcommand \xit {\xi_t}
\newcommand \wt {\omega_t}
\newcommand \wti {\omega_t^i}
\newcommand \stn {s_{t+1}}
\newcommand \rtn {r_{t+1}}
\newcommand \xitn {\xi_{t+1}}
\newcommand \wtn {\omega_{t+1}}
\newcommand \stp {s_{t-1}}
\newcommand \atp {a_{t-1}}
\newcommand \wtki {\omega_{t+k}^i}
\newcommand \Vtps {{V_{t,T}^\ps}}
\newcommand \Vtnps {{V_{t+1,T}^\ps}}
\newcommand \Vtpm {{V_{t,T}^{\pl,\mu}}}
\newcommand \Vtnpm {{V_{\mu,t+1,T}^{\pl}}}
\newcommand \Beliefs {\Xi}
\newcommand \Actions {\CA}
\newcommand \States {\CS}
\newcommand \hats {\hat{a}_t^*}
\newcommand \has {\hat{a}^*}
\newcommand \bV {\bar{V}}
\newcommand \hv {\hat{v}}
\newcommand \hvs {\hv^*}
\newcommand \bv {\bar{v}}
\newcommand \tv {\tilde{v}}
\newcommand {\citep} {\cite}
\newcommand {\citet} {\cite}
\begin{document}

\author{Christos Dimitrakakis\\
  EPFL\\
  Lausanne\\
  Switzerland\\
  christos.dimitrakakis@gmail.com}

\title{Tree Exploration for Bayesian RL Exploration\footnote{This is a corrected and slightly expanded version of the homonymous paper presented at CIMCA'08.}}

\maketitle
\thispagestyle{empty}

\begin{abstract} 
  Research in reinforcement learning has produced algorithms for
  optimal decision making under uncertainty that fall within two main
  types.  The first employs a Bayesian framework, where optimality
  improves with increased computational time.  This is because the
  resulting planning task takes the form of a dynamic programming
  problem on a belief tree with an infinite number of states.  The
  second type employs relatively simple algorithm which are shown to
  suffer small regret within a distribution-free framework.  This
  paper presents a lower bound and a high probability upper bound on
  the optimal value function for the nodes in the Bayesian belief
  tree, which are analogous to similar bounds in POMDPs.  The bounds
  are then used to create more efficient strategies for exploring the
  tree.  The resulting algorithms are compared with the
  distribution-free algorithm UCB1, as well as a simpler baseline
  algorithm on multi-armed bandit problems.
\end{abstract}

\section{Introduction}

In recent
work~\citep{NIPS2007:ross:theoretical_heuristic_search_pomdp,Toussaint:probabilistic-POMDP:2006,NIPS2007:hoffman:bayesian_policy_learning,poupart2006asd,duff2002olc,NIPS2007:ross:bapomdp,duff97local},
Bayesian methods for exploration in Markov decision processes (MDPs)
and for solving known partially-observable Markov decision processes
(POMDPs), as well as for exploration in the latter case, have been
proposed. All such methods suffer from computational intractability
problems for most domains of interest.

The sources of intractability are two-fold.  Firstly, there may be no
compact representation of the current belief.  This is especially true
for POMDPs.  Secondly, optimally behaving under uncertainty requires
that we create an {\em augmented} MDP model in the form of a tree
\citep{duff2002olc}, where the root node is the current belief-state
pair and children are all possible subsequent belief-state pairs.
This tree grows large very fast, and it is particularly problematic to
grow in the case of continuous observations or actions.  In this work,
we concentrate on the second problem -- and consider algorithms for
expanding the tree.

Since the Bayesian exploration methods require a tree expansion to be
performed, we can view the whole problem as that of {\em nested}
exploration.  For the simplest exploration-exploitation trade-off
setting, bandit problems, there already exist nearly optimal,
computationally simple methods~\citep{JMLR:Auer:2002}.  Such methods
have recently been extended to tree
search~\citep{ECML:Kocsis+Szepesvari:2006}.  This work proposes to
take advantage of the special structure of belief trees in order to
design nearly-optimal algorithms for expansion of nodes.  In a sense,
by recognising that the tree expansion problem in Bayesian look-ahead
exploration methods is also an optimal exploration problem, we develop
tree algorithms that can solve this problem efficiently.  Furthermore,
we are able to derive interesting upper and lower bounds for the value
of branches and leaf nodes which can help limit the amount of search.
The ideas developed are tested in the multi-armed bandit setting for
which nearly-optimal algorithms already exist.

The remainder of this section introduces the augmented MDP formalism
employed within this work and discusses related
work. Section~\ref{sec:tree_expansion} discusses tree expansion in
exploration problems and introduces some useful bounds.  These bounds
are used in the algorithms detailed in Section~\ref{sec:algorithms},
which are then evaluated in Section~\ref{sec:experiments}.  We
conclude with an outlook to further developments.

\subsection{Preliminaries}
\label{sec:preliminaries}
We are interested in sequential decision problems where, at each time
step $t$, the agent seeks to maximise the expected utility
\[
\E[u_t \given \cdot] \defn \sum_{k=1}^{\infty} \gamma^k \E[r_{t+k} \given \cdot],
\]
where $r$ is a stochastic reward and $u_t$ is simply the discounted
sum of future rewards.  We shall assume that the sequence of rewards
arises from a Markov decision process, defined below.

\begin{definition}[Markov decision process]
  A Markov decision process (MDP) is defined as the tuple $\mu = (\CS,
  \CA, \CT, \CR)$ comprised of a set of states $\CS$, a set of actions
  $\CA$, a transition distribution $\CT$ conditioning the next state
  on the current state and action,
  \begin{equation}
    \CT(s'|s,a) \defn \mu(s_{t+1}\eq s'|s_t \eq s,a_t = a)
  \end{equation}
  satisfying the Markov property $\mu(s_{t+1} \given s_t, a_t) =
  \mu(\stn \given \st, \at, \stp, \atp, \ldots)$, and a
  reward distribution $\CR$ conditioned on states and actions:
  \begin{equation}
    \CR(r | s,a) \defn \mu(\rtn \eq r \given \st \eq s, \at \eq a),
  \end{equation}
  with $a \in \CA$, $s, s' \in \CS$, $r \in \Reals$.  Finally,
  \begin{equation}
    \mu(\rtn, \stn | \st, \at) = \mu(\rtn | \st,
    \at)\mu(\stn | \st, \at).
  \end{equation}
  \label{def:mdp}
\end{definition}
We shall denote the set of all MDPs as $\MDPs$.  For any policy $\pi$
that is an arbitrary distribution on actions, we can define a
$T$-horizon value function for an MDP $\mu \in \MDPs$ at time $t$ as:
\begin{align*}
  \label{eq:value_function}
  \Vtpm(s,a) &= 
  \E[\rtn \given \st \eq s, \at \eq a, \mu]\\
  &+ \gamma \sum_{s'} \mu(\stn \eq s' \given \st \eq s, \at \eq a) \Vtnpm(s').
\end{align*}
Note that for the infinite-horizon case, $\lim_{T \to \infty} \Vtpm =
V^{\pi, \mu}$ for all $t$.

In the case where the MDP is unknown, it is possible to use a Bayesian
framework to represent our uncertainty (c.f.~\citet{duff2002olc}).
This essentially works by maintaining a belief $\xit \in \Beliefs$,
about which MDP $\mu \in \MDPs$ corresponds to reality.  In a Bayesian
setting, $\xit(\mu)$ is our subjective probability measure that $\mu$
is true.

In order to optimally select actions in this framework, we need to use
the approach suggested originally in~\citep{bellman-kalaba:1959} under
the name of Adaptive Control Processes.  The approach was investigated
more fully in~\citep{duff97local,duff2002olc}.  This creates an {\em
  augmented} MDP, with a state comprised of the original MDP's state
$\st$ and our belief state $\xit$.  We can then solve the exploration
{\em in principle} via standard dynamic programming algorithms such as
backwards induction.  We shall call such models Belief-Augmented MDPs,
analogously to the Bayes-Adaptive MDPs of \cite{duff2002olc}.  This is
done by not only considering densities conditioned on the state-action
pairs $(s_t,a_t)$, i.e. $p(\rtn, \stn | \st, \at)$, but taking into
account the belief $\xit \in \Beliefs$, a probability space over
possible MDPs, i.e. augmenting the state space from $\CS$ to $\CS
\times \Beliefs$ and considering the following conditional density:
$p(r_{t+1}, s_{t+1}, \xitn \given s_t, a_t, \xit)$.  More formally, we
may give the following definition:
\begin{definition}[Belief-Augmented MDP]
  A {\em Belief-Augmented MDP} $\nu$ (BAMPD) is an MDP $\nu = (\Omega,
  \CA, \CT', \CR')$ where $\Omega = \CS \times \Beliefs$, where
  $\Beliefs$ is the set of probability measures on $\MDPs$, and $\CT',
  \CR'$ are the transition and reward distributions conditioned
  jointly on the MDP state $s_t$, the belief state $\xit$, and the
  action $a_t$.
  Here $\xit(\xitn | \rtn, \stn, \st, \at)$ is singular, so that we
  can define the transition
  \begin{align*}
    p(\wtn | \at, \wt)
    &\equiv
    p(\stn, \xitn | \at, \st, \xit).
  \end{align*}
\end{definition}
It should be obvious that $s_t, \xit$ jointly form a Markov state in
this setting, called the {\em hyper-state}.  In general, we shall
denote the components of a future hyper-state $\omega_t^i$ as $(s_t^i,
\xi_t^i)$.  However, in occassion we will abuse notation by referring
to the components of some hypserstate $\omega$ as $s_\omega,
\xi_\omega$.  We shall use $\BMDPs$ to denote the set of BMDPs.

As in the MDP case, finite horizon problems only require sampling all
future actions until the horizon $T$.
\begin{equation}
  \Vtps(\wt, \at) = \E[\rtn | \wt, \at]
  + \gamma  \int_\Omega \Vtnps(\wtn) \nu(\wtn|\wt, \at) \, d\wtn.
\end{equation}
However, because the set of hyper-states available at each time-step
is necessarily different from those at other time-steps, the value
function cannot be easily calculated for the infinite horizon case.

In fact, the only clear solution is to continue expanding a belief
tree until we are certain of the optimality of an action.  As has
previously been
observed~\citep{dearden98bayesian,dimitrakakis:icann:2006}, this is
possible since we can always obtain upper and lower bounds on the
utility of any policy from the current hyper-state.  We can apply
such bounds on future hyper-states in order to efficiently expand the
tree.

\subsection{Related work}
\label{sec:related_work}
Up to date, most work had only used full expansion of the belief tree
up to a certain depth.  A notable exception
is~\citet{Wang:bayesian-sparse-sampling:icml:2005}, which uses
Thompson sampling~\citep{thompson1933lou} to expand the tree.  In very
recent work~\citep{RossPineau:OnlinePlanningPOMDPs:jmlr2008}, the
importance of tree expansion in the closely related POMDP
setting\footnote{The BAMDP setting is equivalent to a POMDP where the
  unobservable part of the state is stationary, but continuous
  (chap. 5 \cite{duff2002olc})} has been recognised.  Therein, the
authors contrast and compare many different methods for tree
expansion, including branch-and-bound~\cite{mitten1970bab} methods and
Monte Carlo sampling.

Monte Carlo sampling methods have also been recently explored in the
upper confidence bounds on trees (UCT) algorithms, proposed
in~\citep{icml:gelly:uct:2007,ECML:Kocsis+Szepesvari:2006} in the
context of planning in games.  Our case is similar, however we can
take advantage of the special structure of the belief tree.  In
particular, for each node we can obtain high-probability upper and
lower bounds on the value of the optimal policy.

This paper's contribution is to recognise that tree expansion in
Bayesian exploration is itself an exploration problem with very
special properties.  Based on this insight, it proposes to combine
sampling with lower bounds and upper bound estimates at the leaves.
This allows us to obtain high-probability bounds for expansion of the
tree.  While the proposed methods are similar to the ones used in the
discrete-state POMDP
setting~\citep{RossPineau:OnlinePlanningPOMDPs:jmlr2008}, the BAMDP
requires the evaluation of different bounds at leaf nodes.  On the
experimental side, we present first results on bandit problems, for
which nearly-optimal distribution-free algorithms are known.  We
believe that this is a very important step towards extending the
applicability of Bayesian look-ahead methods in exploration.

\section{Belief tree expansion}
\label{sec:tree_expansion}
Let the current belief be $\xit$ and suppose we observe $x_t^i \defn
(s_{t+1}^i, r_{t+1}^i, a_t^i)$.  This observation defines a unique
subsequent belief $\xitn^i$.  Together with the MDP state $s$, this
creates a hyper-state transition from $\wt$ to $\wtn^i$. By
recursively obtaining observations for future beliefs, we can obtain
an unbalanced tree with nodes $\{\omega_{t+k}^i : k=1, \ldots, T; i=1,
\ldots\}$.  However, we cannot hope to be able to fully expand the
tree.  This is especially true in the case where observations
(i.e. states, rewards, or actions) are continuous, where we cannot
perform even a full single-step expansion.  Even in the discrete case
the problem is intractable for infinite horizons -- and far too
complex computationally for the finite horizon case.  However, had
there been efficient tree expansion methods, this problem would be
largely alleviated.  The remainder of this section details bounds and
algorithms that can be used to reduce the computational complexity of
the Bayesian lookahead approach.

\subsection{Expanding a given node}
\label{sec:node_expansion}
All tree search methods require the expansion of leaf nodes.  However,
in general, a leaf node may have an infinite number of children.  We
thus need some strategies to limit the number of children.

More formally, let us assume that we wish to expand in node $\wt^i =
(\xit^i, \st^i)$, with $\xit^i$ defining a density over $\MDPs$.  For
discrete state/action/reward spaces, we can simply enumerate all the
possible outcomes $\{\wtn^j\}_{j=1}^{|\CS \times \CA \times R|}$,
where $R$ is the set of possible reward outcomes.  Note that if the
reward is deterministic, there is only one possible outcome per
state-action pair.  The same holds if $\CT$ is deterministic, in both
cases making an enumeration possible.  While in general this may not
be the case, since rewards, states, or actions can be continuous, in
this paper we shall only examine the discrete case.

\subsection{Bounds on the optimal value function}
\label{sec:node_selection}
At each point in the process, the next node $\wtki$ to be expanded is
the one maximising a utility $U(\wtki)$.  Let $\Omega_T$ be the set of
leaf nodes.  If their values were known, then we could easily perform
the backwards induction procedure shown in
Algorithm~\ref{alg:backwards_induction}.
\begin{algorithm}[htb]
  \caption{Backwards induction action selection}
  \label{alg:backwards_induction}
  \begin{algorithmic}[1]
    \Procedure{BackwardsInduction}{$t, \nu, \Omega_T, V_T^*$}
    \For{$n=T-1, T-2, \ldots, t$}
    \For{$\omega \in \Omega_n$}
    \State $a(\omega) = \argmax_a [\E(r|\omega',\omega,\nu) +  \gamma V_{n+1}^* (\omega')] \nu(\omega'|\omega, a)$
    \State $V_n^*(\omega)\eq$ $\sum_{\omega' \in \Omega_{n+1}} \nu(\omega'|\omega,a^*_n) [\E(r|\omega',\omega,\nu) +  \gamma V_{n+1}^* (\omega')]$
    \EndFor
    \EndFor
    \State \Return $a^*_t$
    \EndProcedure
  \end{algorithmic}
\end{algorithm}
The main problem is obtaining a good estimate for $V^*_T$, i.e. the
value of leaf nodes. Let $\pi^*(\mu)$ denote the policy such that, for
any $\pi$,
\[
V^{\pi^*(\mu)}_\mu(s) \geq V^\pi_\mu(s) \qquad \forall s \in \States.
\]
Furthermore, let the maximum probability MDP arising from the belief
at hyper-state $\omega$ be $\hat{\mu}_\omega \defn \argmax_{\mu}
\hat{\mu}$.  Similarly, we denote the mean MDP with $\bar{\mu}_\omega
\defn \E[\mu | \xi_\omega]$.

\begin{proposition}
  The optimal value function at any leaf node $\omega$ is bounded by
  the following inequalities
  \begin{equation}
    \int V_\mu^{\pi^*(\mu)}(s_\omega) \xi_\omega(\mu) d\mu  \geq 
    V^*(\omega) \geq \int V_\mu^{\pi^*(\bar{\mu}_\omega)}(s_\omega) \xi_\omega(\mu) \, d\mu.
  \end{equation}
\end{proposition}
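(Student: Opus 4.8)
The plan is to obtain both inequalities by comparing the Bayes-optimal policy of the belief-augmented MDP $\nu$ against two reference policies: for the upper bound, the (infeasible) policy that knows the true $\mu$ and plays $\pi^*(\mu)$; for the lower bound, the fixed policy $\pi^*(\bar{\mu}_\omega)$ that is optimal for the mean MDP. The common engine is the observation that, for any policy measurable with respect to the hyper-state, $V^*(\omega)$ decomposes as a posterior average over $\mu \sim \xi_\omega$ of the return the induced policy earns inside the concrete MDP $\mu$ started from $s_\omega$. I would first establish this decomposition carefully, since both bounds then follow from it together with elementary optimality arguments.

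For the upper bound, let $\pi$ denote the optimal BAMDP policy attaining $V^*(\omega)$. Since $\pi$ selects actions as a function of the hyper-state $(s,\xi)$, once the true MDP is fixed to $\mu$ the belief component becomes a deterministic function of the observed history, so $\pi$ collapses to a \emph{history-dependent} policy inside $\mu$ whose value I write $V_\mu^{\pi}(s_\omega)$. By the decomposition, $V^*(\omega) = \int V_\mu^{\pi}(s_\omega)\, \xi_\omega(\mu)\, d\mu$. The key step is then the standard fact that in a \emph{known} MDP the stationary optimal policy dominates all policies, including history-dependent ones, so that $V_\mu^{\pi}(s_\omega) \le V_\mu^{\pi^*(\mu)}(s_\omega)$ pointwise in $\mu$. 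Integrating this inequality against $\xi_\omega$ yields the upper bound.

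For the lower bound, observe that $\pi^*(\bar{\mu}_\omega)$, being a stationary policy on $\States$, can be executed in the BAMDP simply by ignoring the belief coordinate; it is therefore a feasible (though generally suboptimal) BAMDP policy. Because its actions do not depend on the evolving belief, the decomposition specialises to $V^{\pi^*(\bar{\mu}_\omega)}(\omega) = \int V_\mu^{\pi^*(\bar{\mu}_\omega)}(s_\omega)\, \xi_\omega(\mu)\, d\mu$, with no history-dependence to track. Optimality of $V^*$ among all BAMDP policies then gives $V^*(\omega) \ge V^{\pi^*(\bar{\mu}_\omega)}(\omega)$, which is exactly the claimed lower bound.

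The main obstacle is making the decomposition rigorous rather than assuming it. I would justify it via the tower property: condition on $\mu$, use that $\xi_\omega$ is the subjective prior from which $\mu$ is drawn and that the belief is a sufficient statistic of the history, and invoke the BAMDP transition law $\nu(\omega' \mid \omega, a)$ to match the per-$\mu$ returns to the augmented-state returns. Care is needed with the measurability of the induced history-dependent policy and, in the continuous-$\MDPs$ setting, with interchanging the integral over $\mu$ and the infinite discounted sum; the discount factor $\gamma < 1$ together with bounded rewards makes the latter routine by dominated convergence, but it should be stated explicitly.
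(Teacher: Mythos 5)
Your proof is correct and follows essentially the same route as the paper: the paper writes $V^*(\omega) = \sup_\pi \int V_\mu^{\pi}(s_\omega)\, \xi_\omega(\mu)\, d\mu$, obtains the upper bound from the exchange $\sup_\pi \int \leq \int \sup_\pi$ (your pointwise-dominance-then-integrate step is exactly the proof of that exchange), and obtains the lower bound from $V^*(\omega) \geq V^{\pi}(\omega)$ applied to the fixed policy $\pi^*(\bar{\mu}_\omega)$, whose value is by definition the stated integral. The only difference is one of rigor, in your favor: you explicitly justify the posterior decomposition of hyper-state values and the feasibility of executing $\pi^*(\bar{\mu}_\omega)$ in the BAMDP, both of which the paper takes for granted.
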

\begin{proof}
  By definition, $V^*(\omega) \geq V^\pi(\omega)$ for all $\omega$,
  for any policy $\pi$.  The lower bound follows trivially, since
  \begin{equation}
    \label{eq:value_lower_bound}
    V^{\pi^*(\bar{\mu}_\omega)}(\omega) \defn \int
    V_\mu^{\pi^*(\bar{\mu}_\omega)}(s_\omega) \xi_\omega(\mu) \, d\mu.
  \end{equation}
  The upper bound is derived as follows.  First note that for any
  function $f$, $\sup_x \int f(x, u) \,du \leq \int \sup_x f(x, u)
  \,du$.  Then, we remark that:
  \begin{subequations}
    \begin{align}
      V^*(\omega)
      &=
      \sup_\pi \int V_\mu^{\pi}(s_\omega) \xi_\omega(\mu) \, d\mu
      \label{eq:value_star_omega}
      \\
      &\leq
      \int \sup_\pi V_\mu^{\pi}(s_\omega) \xi_\omega(\mu) \, d\mu 
      \label{eq:expected_max_value}
      \\
      &=
      \int V_\mu^{\pi^*(\mu)}(s_\omega) \xi_\omega(\mu) \, d\mu.
      \label{eq:expected_max_policy_value}
    \end{align}
  \end{subequations}
  \qed
\end{proof}

In POMDPs, a trivial lower bound can be obtained by calculating the
value of the blind
policy~\citet{Hauskrecht:ValueFunctionApproximationPOMDP:jair,smith2005pbp},
which always takes the same action.  Our lower bound is in fact the
BAMDP analogue of the value of the blind policy in POMDPs.  This is
because for any {\em fixed policy} $\pi$, it holds trivially that
$V^\pi(\omega) \leq V^*(\omega)$.  In our case, we have made this
lower bound tighter by considering $\pi^*(\bar{\mu}_\omega)$, the
policy that is greedy with respect to the current mean estimate.

The upper bound itself is analogous to the POMDP value function bound
given in Theorem~9
of~\citet{Hauskrecht:ValueFunctionApproximationPOMDP:jair}.  However,
while the lower bound is easy to compute in our case, the upper bound
can only be approximated via Monte Carlo sampling with some
probability.

\subsection{Calculating the bounds}
In general, \eqref{eq:value_lower_bound} and
\eqref{eq:expected_max_value} cannot be expressed in closed form.
However, the integrals can be approximated via Monte Carlo sampling.
Let the leaf node which we wish to expand be $\omega$.  Then, we can
obtain $c$ MDP samples from the belief at $\omega$: $\mu_1, \ldots,
\mu_c \sim \xi_\omega(\mu)$.

The lower bound can be calculated by performing value iteration in the
mean MDP, in order to obtain the mean-MDP-optimal policy
$\pi^*(\bar{\mu}_\xi)$, where $\bar{\mu}_\xi$ is the {\em mean MDP}
for belief $\xi$.  If the beliefs $\xi$ can be expressed in closed
form, it is easy to calculate the mean transition distribution and the
mean reward from $\xi$.  For discrete state spaces, transitions can be
expressed as multinomial distributions, to which the Dirichlet density
is a conjugate prior.  In that case, for Dirichlet parameters
$\{\psi_{i}^{j,a}(\xi) : i, j \in \CS, a \in \CA\}$, we have
$\bar{\mu}_\xi(s'|s,a) = \psi_{s'}^{s,a}(\xi) / \sum_{i \in \CS}
\psi_{i}^{s,a} (\xi)$.  Similarly, for Bernoulli rewards, the
corresponding mean model arising from the beta prior with parameters
$\{\alpha^{s,a}(\xi), \beta^{s,a}(\xi) : s \in \CS, a \in \CA\}$ is
$\E[r|s, a, \bar{\mu}_\xi] = \alpha^{s,a}(\xi) / (\alpha^{s,a}(\xi) +
\beta^{s,a}(\xi))$.  Then optimal policy for the mean MDP can be found
with standard dynamic programming.

We can now use the mean-optimal polciy to obtain a stochastic lower
bound on the optimal value function. First, we calculate the value
function of the mean-optimal policy for each sampled MDP $\mu_k$. We
then average these samples to obtain the following approximation to
\eqref{eq:value_lower_bound}:
\begin{equation}
  \label{eq:empirical_lower_bound}
  \bar{v}_c(\omega) \defn \frac{1}{c}\sum_{k=1}^c V_{\mu_k}^{\pi^*(\bar{\mu}_\omega)}(s_\omega).
\end{equation}

For upper bounds, we follow a similar procedure. For each $\mu_k$, we
derive the optimal policy $\pi^*(\mu_k)$ and estimate its value
function $\tv^*_k \defn V^{\pi^*(\mu_k)}_{\mu} \equiv V^*_\mu$.  We
may then average these samples to obtain
\begin{equation}
  \label{eq:empirical_upper_bound}
  \hat{v}^*_c(\omega) \defn \frac{1}{c}\sum_{k=1}^c \tilde{v}^*_k(s_\omega).
\end{equation}
Let $\bar{v}^*(\omega) = \int_{\CM} \xi_\omega(\mu) V_\mu^*(s_\omega)
\, d\mu$. It holds that $\lim_{c \to \infty} [\hat{v}_c] =
\bar{v}^*(\omega)$ and that $\E[\hat{v}_c] = \bar{v}^*(\omega)$.  Due
to the latter, we can apply a Hoeffding inequality
\begin{equation}
  \label{eq:high_probability_upper_bound}
  \Pr\left(
    |\hat{v}^*_c(\omega) -  \bar{v}^*(\omega)| > \epsilon\right)
  < 2\exp\left(-\frac{2c\epsilon^2}{(V_{\max} - V_{\min})^{2}}
  \right),
\end{equation}
thus bounding the error within which we estimate the upper bound.  For
$r_t \in [0,1]$ and discount factor $\gamma$, note that $V_{\max} -
V_{\min} \leq 1/(1-\gamma)$. A similar inequality holds for the lower bound.

\subsection{Bounds on parent nodes}

We can obtain upper and lower bounds on the value of every action $a
\in \Actions$, at any part of the tree, by iterating over $\Omega_t$,
the set of possible outcomes following $\omega_t$:
\begin{align}
  \bv(\wt, a) &= \sum_{i=1}^{|\Omega_t|} \Pr(\wti \given \wt, a)
  \left[
    r^i_{t} + \gamma \bv(\wti)
  \right]
  \\
  \hvs(\wt, a) &= \sum_{i=1}^{|\Omega_t|} \Pr(\wti \given \wt, a)
  \left[
    r^i_{t} + \gamma \hvs(\wti)
  \right],
\end{align}
where the probabilities are implicitly conditional on the beliefs at
each $\wt$.  For every node, we can calculate an upper and lower bound
on the value of all actions.  Obviously, if at the root node $\wt$,
there exists some $\hats$ such that $\bv(\wt, \hats) \geq \bv^*(\wt,a)$
for all $a$, then $\hats$ is unambiguously the optimal action.

However, in general, there may be some other action, $a'$, whose upper
bound is higher than the lower bound of $\hats$.  In that case, we
should expand either one of the two trees.

It is easy to see that the upper and lower bounds at any node $\wt$
can be expressed as a function of the respective bounds at the leaf
nodes.  Let $B(\wt, a)$ be the set of all branches from $\wt$ when
action $a$ is taken.  For each branch $b \in B(\wt, a)$, let $\xit(b)$
be the probability of the branch from $\wt$ and $u_t^b$ be the
discounted cumulative reward along the branch.  Finally, let $L(\wt,
a)$ be the set of leaf nodes reachable from $\wt$ and $\omega_b$ be
the specific node reachable from branch $b$.  Then, upper or lower
bounds on the value function can simply be expressed as $v(\wt, a) =
\sum_{b \in B(\wt, a)} u_t^b + \gamma^{t_b} \xi_{\wt}(b) v(\omega_b)$.
This would allow us to use a heuristic for greedily minimising the
uncertainty at any branch.  However, the algorithms we shall consider
here will only employ evaluation of upper and lower bounds.

\section{Algorithms}
\label{sec:algorithms}
At each time step $t$, $N$ expansions are performed, starting from
state $\omega = omega_t$.  At the $n$-th expansion, a utility function
$U$ is evaluated for every node $\omega$ in the set of leaf nodes
$L_n$.  The main difference among the algorithms is the way
$U$ is calculated.
\begin{algorithm}
  \caption{General tree expansion algorithm}
  \label{al:tree_expansion}
  \begin{algorithmic}[1]
    \Procedure{ExpandTree}{$\omega$, $N$}
    \State $n = 1$, $\omega^1 = \omega$.
    \State $L = \{\omega^1\}$
    \While{$n \leq N$}
    \State Take one more sample for $\hat{v}_c(\omega)$ from all leaf nodes $L_n$
    \State Calculate $U(\omega_i)$ for all $i \in L_n$.
    \State Calculate $C(\omega_b)$, the children of branch $b = \argmax_i U(\omega_i)$.
    \State $L_{n+1} = L_n \cup (L_{n+1} \backslash \omega_b)$.
    \State $n++$
    \EndWhile
    \State \Return a
    \EndProcedure
  \end{algorithmic}
\end{algorithm}

\begin{enumerate}
\item {\em Serial}.  This results in a nearly balanced tree, as the
  oldest leaf node is expanded next, i.e.  $U(\omega_i) = - i$,
  the negative node index.
\item {\em Random}.  In this case, we expand any of the current leaf
  nodes with equal probability, i.e. $U(\omega_i) = U(\omega_j)$ for
  all $i,j$.  This can of course lead to unbalanced trees.
\item {\em Highest lower bound}.  We expand the node maximising a lower bound
  i.e. $U(\omega_i) = \gamma^{t_i} \bV(\omega_i)$.
\item {\em Thompson sampling}.  We expand the node for which the
  currently sampled upper bound is highest, i.e. $U(\omega_i) = \gamma^{t_i}
  \tilde{v}(\omega_i)$.
\item {\em High probability upper bound}.  We expand the node with the
  highest mean upper bound $U(\omega_i) = \gamma^{t_i}
  \max_k\{\hat{v}^*_{c(i)}(\omega_i), \bar{v}(\omega_i)\}$.
\end{enumerate}
While methods 3 and 4 only use one sample from the upper bound
calculation at every iteration.  The last two methods retain the
samples obtained in the previous iterations and use them to calculate
the mean estimate.

\section{Experiments}
\label{sec:experiments}

We compared the regret of the tree expansion to the optimal policy in
bandit problems with Bernoulli rewards with two benchmarks: the UCB1
algorithm~\citep{mach:Auer+Cesa+Fischer:2002}, which suffers only
logarithmic regret, and secondly a Bayesian algorithm that is greedy
with respect to a mean Bayesian estimate with a prior density
$\textrm{Beta}(0,1)$, i.e. Algorithm~\ref{alg:bayesian_greedy} applied
with $\omega = (0, \xi_t)$, which is a simple optimistic heuristic for
such problems.
\begin{algorithm}[htb]
  \caption{Greedy Bayesian action selection}
  \label{alg:bayesian_greedy}
  \begin{algorithmic}[1]
    \Procedure{BayesianGreedy}{$\omega, \gamma$}
    \State Select  $\hat{a}^* = \argmax V^*_{\bar{\mu}}(s_\omega, a)$
    \EndProcedure
  \end{algorithmic}
\end{algorithm}

We compared the algorithms using the notion of expected undiscounted
regret accumulated over $T$ time steps, i.e. the expected loss that a
specific policy $\pi$ suffers over the policy which always chooses the
arm $a^*$ with the highest mean reward:
\[
\sum_{T=1}^T \E[r | a_t = a^*] - 
\sum_{T=1}^T \E[r | \pi].
\]
In order to determine the expected regret experimentally we must
perform multiple independent runs and average over them.

Figure~\ref{fig:bandit_results} shows the cumulative undiscounted
regret for horizon $T = 2/(1-\gamma)$, with $\gamma=0.9999$ and
$|\CA|=2$, averaged over 1000 runs.  We compare the UCB1 algorithm
({\em ucb}), and the Bayesian baseline ({\em base}) with the BAMDP
approach.  The figure shows the cumulative undiscounted regret as a
function of the number of look-aheads, for the following expansion
algorithms: {\em serial}, {\em random}, highest lower bound ({\em
  lower bound}), and high probability upper bound({\em upper bound}).
The last two algorithms use $\gamma$-rate discounting for future node
expansion.

It is evident from these results that the highest lower bound method
never improves beyond the first expansion.  This is due to the fact
that the lower bounds never change after the first step when this
algorithm is used.  The simple serial expansion seems to perform only
slightly better.  On the other hand, while the serial expansion is
consistently better than the random expansion, it does not manage to
achieve less than half of the regret of the latter.  It thus appears
as though the stochastic selection of branches is in itself of quite
some importance in this type of problem.  For problems with more arms
and longer horizons, the differences between methods are amplified.
The results are in agreement with those obtained in the POMDP setting,
where upper bound expansions appear to be
best~\citep{RossPineau:OnlinePlanningPOMDPs:jmlr2008}.  
\begin{figure}
  \centering
  \subfloat[$|\CA|=2, \gamma=0.999$]{
    \includegraphics[width=0.49\textwidth]{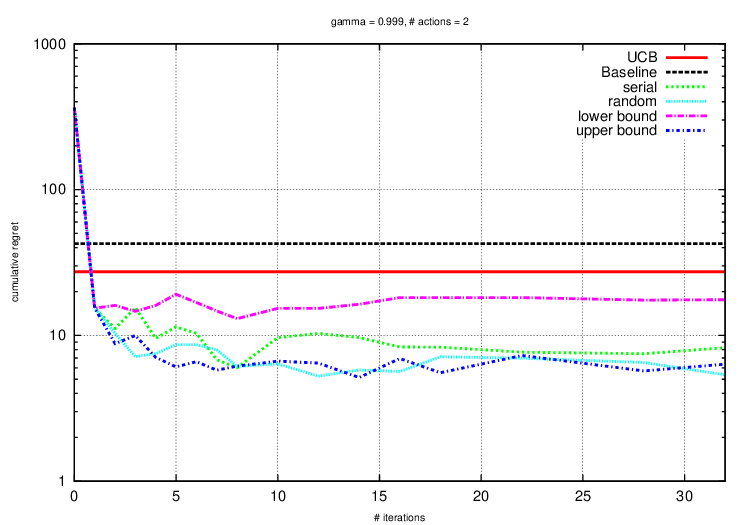}
  }
  \subfloat[$|\CA|=4, \gamma=0.999$]{
    \includegraphics[width=0.49\textwidth]{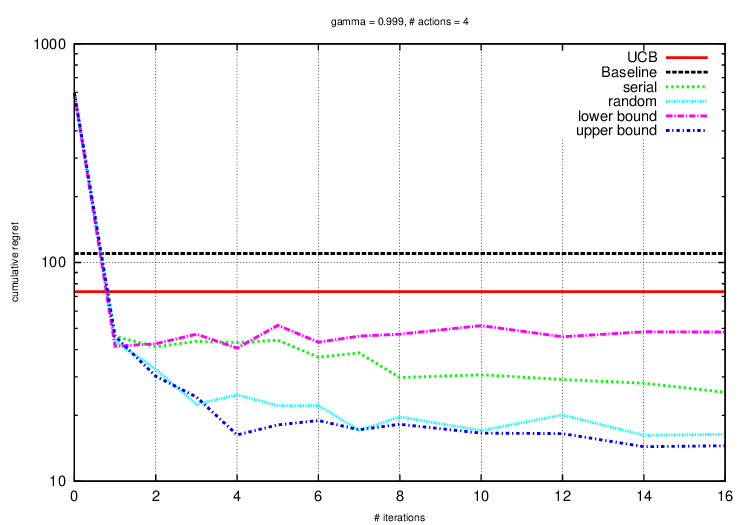}
  }
  \\
  \subfloat[$|\CA|=2, \gamma=0.9999$]{
    \includegraphics[width=0.49\textwidth]{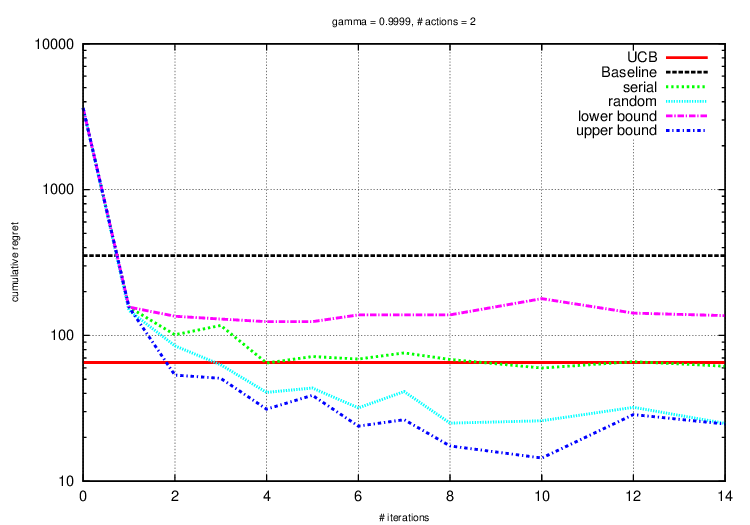}
  }
  \subfloat[$|\CA|=4, \gamma=0.9999$]{
    \includegraphics[width=0.49\textwidth]{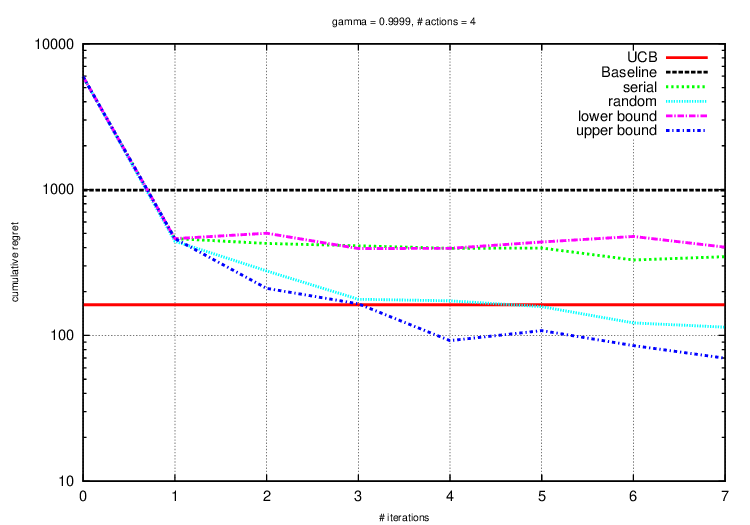}
  }
  \caption{Cumulative undiscounted regret accrued over $2/(1-\gamma)$ time-steps.}
  \label{fig:bandit_results}
\end{figure}
\nopagebreak
\section{Conclusion}
One of this paper's aims was to draw attention to the interesting
problem of tree expansion in Bayesian RL exploration.  To this end,
bounds on the optimal value function at belief tree leaf nodes have
been derived and then utilised as heuristics for tree expansion.  It
is shown experimentally that the resulting expansion methods have very
significant differences in computational complexity for bandit
problems.  While the results are preliminary in the sense that no
experiments on more complex problems are presented and that only very
simple expansion algorithms have been tried, they are nevertheless
significant in the sense that the effect of the tree exploration
method used is very large.

Apart from performing further experiments, especially with more
sophisticated expansion algorithms, future work should include
deriving bounds on the minimum and maximum depth reached for each
algorithm, as well as more general regret bounds if possible.  The
regret could be measured either as simply the $\epsilon, \delta$
optimality of $\has$, or, more interestingly, bounds on the cumulative
online regret suffered by each algorithm. More importantly, problems
with infinite observation spaces (i.e. with continuous rewards) should
also be examined.

My current work includes the analysis of the {\em stochastic}
branch-and-bound algorithm such as the ones described
in~\citep{Norkin:StochasticBnB:MP96,kleywegt:saa-stochastic-optimization:2001}.
This algorithm is essentially the same as the high probability upper
bound method used in the current paper.  Another interesting approach
would be to develop a new expansion algorithm that achieves a small
anytime regret, perhaps in the lines of
UCT~\citep{ECML:Kocsis+Szepesvari:2006}.  Such algorithms have been
very successful in solving problems with large spaces and may be
useful in this problem as well, especially when the space of
observations becomes larger.

\subsubsection*{Acknowledgments}
This work was supported by the ICIS-IAS project.  Thanks to Carsten
Cibura and Frans Groen for useful discussions and to Aikaterini
Mitrokotsa for proofreading.

\appendix
\section{The Bayesian inference in detail}

Let $\MDPs$ be the set of MPDs with unknown transition probabilities
and state space $\CS$ of size $n$. 
We denote our belief at time $t+1$ about which MDP is true as
\begin{subequations}
  \begin{align}
    \xitn(\mu) 
    &\defn \xit(\mu | s_{t+1}, s_t a_t)
    \\
    &=
    \frac{\mu(s_{t+1}|s_t, a_t,\mu) \xit(\mu)}
    {\int_{\MDPs}\mu'(s_{t+1}|s_t a_t,\mu') \xit(\mu') \, d\mu'}
  \end{align}
\end{subequations}

Since this is an infinite set of MDPs, we can have each MDP $\mu$
correspond to a particular probability distribution over the
state-action pairs.  More specifically, let us define for each state
action pair $s,a$, a Dirichlet distribution
\begin{equation}
  \label{eq:dirichlet}
  \xit(q_{s,a} = x)
  = \frac{\Gamma(\psi^{s,a}(t))}{\prod_{i \in \CS} \Gamma(\psi^{s,a}_i(t))}
  \prod_{i \in \CS} x_i^{\psi^{s,a}_i(t)}.
\end{equation}
with $q_{s,a} \defn \Pr(s_{t+1} \given s_t \eq s, a_t \eq a)$.
We will denote by $\Psi(t)$ the matrix of state-action-state
transition counts at time t, with $\Psi(0)$ being the matrix defining
our prior Dirichlet distriburtion.

We shall now model the joint prior over transition distributions as
simply the product of priors. Then we can denote the matrix of
state-action-state transition {\em probabilities} for MDP $\mu$ as
$Q^\mu$ and let $q^\mu_{s,a,i} \defn \mu(s_{t+1} \eq i \given s_t \eq
s, a_t \eq a)$.  Then
\begin{subequations}
  \begin{align}
    \xit(\mu)
    &= \xit(Q^\mu) = \xit(q_{s,a} = q_{s,a}^\mu \forall s \in \CS, a \in \CA)
    \\
    &= \prod_{s \in \CS}\prod_{a \in \CA} \xit(q_{s,a} = q_{s,a}^\mu),
    \\
    &= \prod_{s \in \CS}\prod_{a \in \CA} \frac{\Gamma(\psi^{s,a}(t))}{\prod_{i \in \CS} \Gamma(\psi^{s,a}_i(t))}
    \prod_{i \in \CS} \left(q^\mu_{s,a,i}\right)^{\psi^{s,a}_i(t)}.
  \end{align}
\end{subequations}
where we assume that each state-action pair's transition distribution
is independent of the other transition distributions.  This means that
$\Psi$ is a sufficient statistic for expressing the density over
$\MDPs$.

We can additionally model $\mu(\rtn | \st, \at)$ with a suitable
belief and assume independence.  This in no way complicates the
exposition for MDPs.

\bibliographystyle{plain}
\bibliography{../../../bib/misc,../../../bib/mine}

\end{document}